\title{An Inductive Formalization of Self Reproduction in Dynamical Hierarchies}
\author{Janardan Mishra \\
janardanmisra@acm.org}
\begin{document}

\maketitle

\begin{abstract}

Formalizing self reproduction in dynamical hierarchies is one of the important problems in Artificial Life (AL) studies. We study, in this paper, an inductively defined algebraic framework for self reproduction on macroscopic organizational levels under dynamical system setting for simulated AL models and explore some existential results. Starting with defining self reproduction for atomic entities we define self reproduction with possible mutations on higher organizational levels in terms of hierarchical sets and the corresponding inductively defined `meta' - reactions. We introduce constraints to distinguish a collection of entities from genuine cases of emergent organizational structures.

\end{abstract}

\section{Introduction}

Self reproduction is one of ubiquitously studied phenomena in
Artificial Life (ALife) studies. There are early models of self
reproduction based on cellular automata and their modern
simplified versions as well as other models with novel syntactical
representations and corresponding semantics~\cite{Sipper98}. There
exist formalization aimed at various levels of abstractions and
properties for self reproduction. Recent work on formalization
include~\cite{AL03} where authors define a probability measure to
quantify how much probable is self reproduction of a subsystem in
a model under one environment with respect to some other
environment.

Nonetheless we lack complete understanding of how self
reproduction emerges and maintains itself across higher level
organizational structures. Real life is full of examples of such
higher order structures -- starting with simple molecules,
monomers, polymers, supra molecular structures like proteins,
organelles, cells, organisms.

In~\cite{mr98} authors present a $2 D$ lattice automaton based
simulation of higher order emergent structure (upto 3rd order
hyperstructure - micelle) in the sprit of actual physical
dynamics. They have also presented an analytz following the
formalism of hyperstructures to explain their
synthesis~\cite{bass92,ana011,ana013,ana012}. Nonetheless, the
hyperstructure based approach for dynamical hierarchies leave some
critical aspects informally defined (e.g., emergent properties,
observation process), is semi formal in nature, and thus allow
trivial cases~\cite{ana013,ana012}. In contrast, we adopt in this
paper a more formal approach based upon the set and graph
theoretic notions while precisely working with the simulations of
models (see ``weak emergence"~\cite{ac:BMPRAGIKR00}).

\section{The Framework}

\newtheorem{df}{Definition}
\newtheorem{Th}{Theorem}

The following basic definitions for multisets will be used in the
paper:

A \emph{multiset} $M$ on a set $E$ is a mapping associating
non-negative integers with each element of $E$, that is, $M: E
\rightarrow \mathcal{N}$, where $\mathcal{N} = \{0, 1, \ldots\}$.
For $e \in E$, $M(e)$ is called its \emph{multiplicity} in the
multiset.

Set of all elements $e \in E$ with nonzero multiplicity is called
the \emph{support} of $M$, which is denoted as $Supp(M) = \{e \in
E \mid M(e) > 0\}$.

For multisets $M$ and $M'$ on $E$, we define $M \cup M': E
\rightarrow \mathcal{N}$ such that $\forall e \in E. (M \cup
M')(e)= M(e)+M'(e)$. Similarly $(M \cap M')(e)= min(M(e),
M'(e))$.\\

We will use the term artificial chemistry (AC) in a generic sense
applicable to a wide class of ALife models with computational
dynamics. ACs represent a mathematically generalized metaphors of
`` real chemistry" with well defined ``laws of interaction or
reaction semantics" between the ``elements" or ``molecules" of the
model universe. A detailed review of ACs also appears
in~\cite{ac:Dittrich01}.

An AC $\mathcal{A}$ is usually started with an initial population
of a multiset of molecules $P_0$. $\mathcal{A}$ evolves over time
and we have different populations consisting of different
multisets of molecules during the course of evolution. We
represent time progression of $\mathcal{A}$ as an infinite
sequence of multisets $\mathcal{P}$ = $<P_0, P_1, \ldots>$ such
that $P_i$ precedes $P_{j}$ $\forall j>i$. Multisets $P_0, P_1,
\ldots$ are also referred to as \emph{states} of $\mathcal{A}$ and
$\mathcal{P}$ is called a \emph{run} or \emph{simulation} of
$\mathcal{A}$. A finite strictly consecutive subsequence of states
of $\mathcal{A}$, $<P_{i}, P_{i+1}, \ldots, P_{i+n}>$ is termed as
as a partial run of $\mathcal{P}$. A non consecutive subsequence
$<P_{i_1}, P_{i_2}, \ldots, P_{i_n}>$ with $P_{i_j}$ precedes
$P_{i_k}$ $\forall i_k > i_j$ is called subsequence of states in
$\mathcal{P}$. The set of all such different runs of $\mathcal{A}$
is denoted by $\Gamma$. Each run of $\mathcal{A}$ has potentially
infinite states, though in case of cycles there will be repeating
sub sequences of states.

%Two runs $\mathcal{P}$ = $<P_0, P_1, \ldots>$ and $\mathcal{P'}$ =
%$<P'_0, P'_1, \ldots>$ of $\mathcal{A}$ are \emph{asymptotically
%same} (written as $\mathcal{P} \sim \mathcal{P'}$) \emph{iff
%}$\exists$ $n>0$ such that $P_k = P'_k$ $\forall k >n$. The
%special case when all the runs of $\mathcal{A}$ are asymptotically
%same, it is called \emph{ergodic}.

We assume in this paper that reaction semantics defined in the ALife
model is deterministic. More general case of probabilistic or
stochastic reactions will be dealt in future. For a reaction $r$
defined in terms of the inputs and the corresponding outputs
(ignoring other conditions), we define $\mathit{input_r}$ as a
multiset of input molecules and $\mathit{output_r}$ as the
multiset of the outputs of $r$. For a sequence of reactions
$\mathbf{r}$ = $<r_1, r_2, \ldots, r_n>$, we define
$\mathit{Input}(\mathbf{r})$ = $\bigcup_{r_j \in \mathbf{r}}
\mathit{input_{r_j}}$ as the multiset of all participating input
molecules in the reaction sequence. Similarly
$\mathit{Output}(\mathbf{r})$ = $\bigcup_{r_j \in \mathbf{r}}
\mathit{output_{r_j}}$ is the multiset of all output molecules in
$\mathbf{r}$.

%Note that every reaction in the AGC is node/atompreserving, that is,
%the number of atoms before and after the reaction are the same.\\

% Definition#1

\begin{df}[\emph{Feasible Reaction}]  A reaction
$r$ is said to be \emph{feasible} in a state $P_i$ \emph{iff}
$\forall$ $g \in Supp(\mathit{input_r})$, $P_i(g)>
\mathit{input_r(g)}$.
\end{df}

Informally this means reaction $r$ may execute if all required
input molecules are available in the state $P_i$ of $\mathcal{A}$.
If state $P_i$ is in run $\mathcal{P}$ = $<P_0, P_1, \ldots>$ of
$\mathcal{A}$, $r$ is also said to be a feasible reaction in run
$\mathcal{P}$. In actual ALife models there can be many other global
conditions or environmental constraints associated with the
feasibility of reactions defined by the designer. We have though
ignored, these can be added without much difficulty when applying
the framework on these models. Note that feasibility of a reaction
does not imply automatically that it will be executed as well
since that depends on the scheduling algorithm defined by the
designer of the chemistry which selects the reactions to execute
at any state of the chemistry.

We can extend above definition  to a sequence of reactions as
follows: define $\mathbf{r}$ = $<r_1, r_2, \ldots, r_n>$ as a
\emph{feasible reaction sequence} in a state subsequence
$<P_{i_1}, P_{i_2} \ldots P_{i_n}>$ of run $\mathcal{P}$
\emph{iff} $r_j$ is a feasible reaction in state $P_{i_j}$
$\forall j$ = $1, 2, \ldots n$. $\mathbf{r}$ is also called a
\emph{feasible reaction sequence} in the run $\mathcal{P}$.

% Definition#2

\begin{df}[Potential Causality] Let $\Omega_0$ be the set of
molecules for a run $\mathcal{P}$ $\in \Gamma$ of $\mathcal{A}$.
We define potential causal relation $\Rightarrow$ between two
molecules as follows: $\Rightarrow \subseteq \Omega_0 \times
\Omega_0$ such that for molecules $g_1$, $g_2$ $\in$ $\Omega_0$,
$(g_1, g_2) \in \Rightarrow$ if and only if $\exists$ feasible
reaction $r$ in the run $\mathcal{P}$ such that $g_1 \in
\mathit{input_r}$ and $g_2 \in \mathit{output_r}$. $r$ is termed
as \emph{potential causal link} between $g_1$ and $g_2$ and we
represent this using $g_1 \Rightarrow_r g_2$.\end{df}

There can be multiple causal links between two molecules and each
molecule can be causally linked to multiple other molecules. We
can also define a multi-graph using all the potential causal
reactions at any state of the chemistry.

\begin{df}[Potential Reaction Graph] Define a multi-graph $G_i=(V_i,
E_i)$ for the state $P_i$ such that for each molecule $g \in
Supp(P_i)$ there is a node $e_g$ in $E_i$ and if $g \Rightarrow_r
g'$ then we have a directed edge from $e_{g}$ to $e_{g'}$ in $E_i$
with label $r$, where $r$ is feasible in $P_i$.
\end{df}

Next consider a feasible reaction sequence $\mathbf{r}$ =  $<r_1,
r_2, \ldots, r_n>$ and define for $g, g' \in \Omega_0$, $g
\Rightarrow_{\mathbf{r}} g'$ when $\exists$ $g_0, g_1, \ldots,
g_n$ such that $g_0 = g$, $g_n = g'$ and $g_i \Rightarrow_{r_i}
g_{i+1}$ $\forall$ $0 \leq i < n$. Such a feasible reaction
sequence $\mathbf{r}$ will be termed as a \emph{potential causal
path} between $g, g'$. There can be multiple such potential causal
paths present between $(g, g')$. Note that potential causal path
is not a path in potential reaction graph but is constructed when
chemistry evolves over time.

Now we can define potential self replication using the principle
of preservation of overall resources (dilution flux) along with
the concept of potential causality.

\begin{df}[\textbf{Potentially Self Reproducing Entities}]
A molecule/entity $g$ is defined as potentially self reproducing
in a chemistry $\mathcal{A}$, if $\exists$ a run $\mathcal{P}$ of
$\mathcal{A}$ for which the following holds:

$\exists$  $g' \in \Omega_0$  such that the following conditions
are satisfied:
\begin{description}
    \item[Observational Equivalence]$g' \sim g$, where exact definition
of $\sim \subseteq \Omega_0 \times \Omega_0$ is dependent on the
underlying chemistry and its designer or the observer. For example
if molecules are represented as graphs then $\sim$ can be defined
as graph isomorphism, or if molecules are strings then it will be
character by character string equivalence. $\sim$ can even be
defined by the designer as functional equivalence.
    \item[Reflexive Autocatalysis]
    $\exists$ feasible sequence of reactions $\mathbf{C_p}$ so that $g
\Rightarrow_{\mathbf{C_p}} g'$, that is, $\mathbf{C_p}$ is a
potential causal path between $g$ and $g'$.
    \item[Material Basis]For every such potential causal path $C_p$
between $(g, g')$, which is a feasible reaction sequence in a
partial run $<P_{i_1}, P_{i_2} \ldots P_{i_n}>$ of $\mathcal{P}$,
we have $P_{i_n}(g) > P_{i_1}(g)$ and $\exists$ $X \subseteq
\mathit{Input}(C_p) - \{g\}$, ($X \neq \emptyset$) such that
$\forall$ $g_x \in X$ $P_{i_n}(g_x) < P_{i_1}(g_x)$. \\
Informally, this states that there should be an increase in the
size of population of $g$ and corresponding decrease in some other
populations of participating entities ($X$) in the state $P_{i_n}$
as compared to the sizes of these populations in initial state
$P_{i_1}$.
\end{description}
\end{df}

Let me now discuss the above conditions in the context of ALife
studies: the first requirement of observational equivalence is
fundamental to any ALife study because otherwise in the model
universe itself there cannot have some fundamental embodied
equivalence between two entities and therefore always some
external observer is needed who imposes the equivalence ($\sim$)
between the molecule $g$ and the product $g'$ to define self
reproduction. The apparently objective alternatives to this view
where one might consider structural or functional equivalences can
themselves be considered as externally imposed criterion not
inherent in the model universe unless the underlying chemistry
evolves or possesses some kind of structural or functional
recognition capability. For most of the ALife studies, it is upon the
observer or the designer to define the recognition process which
can be used to determine the equivalence between molecules $g$ and
$g'$. This can also be seen in light of the Valera's theory of
autopoisis which emphasizes upon the ``emergence" of autonomy in
life forms~\cite{Zeleny81}. Also note that by equivalence we may
not require that $g$ and $g'$ are identical and thus $g$ can
reproduce with mutations under some observable limit.

The second requirement of reflexive autocatalysis should be
obvious since all molecules not present in the chemistry at the
beginning should be the result of some reactions. Reflexive
autocatalysis denotes one or more reaction steps in the reaction
sequence starting from $g$ and yielding another molecule $g'$
finally, which should be observationally equivalent to $g$.

The last requirement of material basis is to capture the essence
of entity - environment interaction quantitatively along the lines
of real chemistry. This condition dictates that new molecule
appearing in the chemistry must not be the result of some sort of
magical appearance out of nothing. This requirement is most often
ignored in ALife studies and alternately weakly captured by imposing
dilution flux which keeps the volume of the chemistry constant.
Our formulation makes clear connection between the transformation
of reacting molecules as per the reactions in $\mathbf{C_p}$.

Each potential causal path $\mathbf{C_p}$ leading to potential
self replication for $g$ is also called \emph{potential self
reproducing path of $g$}. Note that potential self replication
does not necessarily guarantee that self replication of $g$ will
occur in every run in which $C_p$ is potentially feasible. The
only thing which is guaranteed is that there exists at least one
run of $\mathcal{A}$, where $C_p$ will actually execute and thus
lead to self replication of $g$. This further highlights the
importance of emergence of \emph{membrane structures} in real life
which had very profound role in making potential self reproducing
paths actual execution paths since due to the presence of membrane
boundaries these potential self reproducing reactions could
actually execute with high probability.

Furthermore it is not again guaranteed that in all those runs
where $C_p$ executes, there is no spontaneous emergence of same
entity $g$ in some other way not involving $g$ in the reactions.
Indeed this is bit unfortunate because then in that case it will
not be possible for any outside observer to establish reflexive
autocatalysis just by looking at entities at different states of
the chemistry.

Next we will consider more strict characterization self
reproduction for special class of chemistries which employ
sequential scheduling where at any state of the chemistry during
simulation only one reaction is selected for the execution. For
these chemistries we consider cyclic runs and prove that every
potentially self reproducing entity indeed self reproduces.
%Although possibility of spontaneous emergence of same entity
%cannot be ruled out even in this case.

% Definition#3

\begin{df}[\emph{Cyclic Run}] A run $\mathcal{P}$ = $<P_0, P_1,
\ldots>$ of $\mathcal{A}$ is \emph{cyclic iff} $\exists n \geq 0,
l > 0$ such that $\forall k \geq 1$, $0 < r \leq l$, $P_{n + kl
+r}$ = $P_{n+r}$. Subsequence $<P_{n+1}, \ldots P_{n+l}>$ is the
\emph{cycle} in $\mathcal{P}$ and a cyclic run is therefore
represented as $\mathcal{P}$ = $<P_0, P_1, \ldots P_n, [P_{n+1},
\ldots P_{n+l}]^\infty>$.
\end{df}

\begin{Th} For a cyclic run $\mathcal{P}$ = $<P_0, P_1, \ldots
P_n, [P_{n+1}, \ldots P_{n+l}]^\infty >$ of $\mathcal{A}$, a
potentially self reproducing entity $g \in \Omega_0$ actually self
reproduces if $\exists$ potential self reproducing path $C_p$ for
$E$ which is feasible in the cycle $\mathbf{p}=[P_{n+1}, \ldots
P_{n+l}]$.
\end{Th}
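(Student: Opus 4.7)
The plan is to reduce ``$g$ actually self reproduces'' to the statement that the reactions comprising $C_p$ are actually selected by the scheduler along some partial run of $\mathcal{P}$, and then to deduce that statement from the cyclic structure together with sequential scheduling. Once the actual execution is secured, the observational equivalence $g' \sim g$ and the material basis inequalities are already built into the definition of a \emph{potential self reproducing path} and transfer directly to the resulting partial run.

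The first technical step is to use the cyclic hypothesis $P_{n+r+ml} = P_{n+r}$ (for every $m \geq 1$ and $1 \leq r \leq l$) to translate the witness subsequence $<P_{i_1}, \ldots, P_{i_k}>$ for feasibility of $C_p = <r_1, \ldots, r_k>$ into an infinite family of such witnesses, one per period. Hence each $r_j$ is feasible at infinitely many states of $\mathcal{P}$. The second step is to argue that under sequential scheduling these reactions must in fact be selected: exactly one reaction fires per state, and since $P_n$ is recovered at $P_{n+l}$ the per-period reaction choices are tightly constrained by the requirement of closing the cycle. Feasibility of $C_p$, together with the compensating input/output imbalance that $C_p$ predicts via the material basis clause, should leave no room for an alternative deterministic schedule that both avoids $C_p$ and still returns the multiset to itself within $l$ steps.

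The main obstacle, and essentially the crux of the theorem, is this second step --- bridging \emph{feasibility} of $C_p$ to its \emph{actual execution}. I expect the argument to require either a mild fairness assumption implicit in the phrase ``sequential scheduling'' (every feasible reaction is eventually chosen), or a direct combinatorial argument that any periodic schedule avoiding $C_p$ but closing the cycle would violate the material basis inequalities that characterise $C_p$ as a potential self reproducing path. Once a concrete partial run $<P_{j_1}, \ldots, P_{j_m}>$ of $\mathcal{P}$ along which $C_p$ executes has been exhibited, the inequalities $P_{j_m}(g) > P_{j_1}(g)$ and $P_{j_m}(g_x) < P_{j_1}(g_x)$ for some $g_x \in \mathit{Input}(C_p) - \{g\}$ are inherited from the definition, certifying actual self-reproduction of $g$ in $\mathcal{P}$.
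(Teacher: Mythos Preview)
Your overall plan coincides with the paper's: reduce ``actual self-reproduction'' of $g$ to the claim that the reactions of $C_p$ are in fact executed along the cycle, and then derive that claim from cyclicity together with sequential scheduling. Where you and the paper part ways is precisely at the crux step you flag. The paper invokes neither a fairness assumption nor the material basis inequalities; its argument is a direct contradiction on the state sequence. Because exactly one reaction is executed per state (sequential scheduling), if at some state of the cycle a reaction \emph{not} belonging to $C_p$ were selected, the successor state produced would differ from the next state recorded in $\mathbf{p}$, so the run would depart from the cycle --- contradicting the hypothesis that $\mathcal{P}$ has the form $<P_0,\ldots,P_n,[P_{n+1},\ldots,P_{n+l}]^\infty>$. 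That is the entire proof in the paper: two sentences asserting this contradiction together with the one-reaction-per-state observation.

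Your proposed alternatives (an implicit fairness condition, or a combinatorial argument pitting the periodicity of the multiset against the net imbalance predicted by the material basis clause) are more elaborate and arguably more candid about what needs justification; the paper's argument leaves unexplained why feasibility of $C_p$ in $\mathbf{p}$, rather than feasibility of some other sequence, forces $C_p$ to be the sequence actually fired. But the mechanism the paper intends is the simpler ``wrong reaction $\Rightarrow$ wrong successor state $\Rightarrow$ not this cycle'' contradiction, not either of your routes.
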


\begin{proof} This is because the feasible reaction
sequence $C_p$ indeed executes in the cycle $\mathbf{p}$,
otherwise there will be different states of the chemistry not
present in $\mathbf{p}$ because of the execution of some other
reactions not in $C_p$, contradicting the very structure of the
cycle. Furthermore due to sequential scheduling of the reactions
during simulations there is always only one potential reaction
which is executed in every state of the cycle.
\end{proof}

Though above characterizations only specify self replication of a
single molecule, it can be seamlessly extended to the case of
simultaneous self replication of multiple molecules. In such cases
either scheduling algorithm will have to execute several reactions
in parallel or the potential self reproducing paths for several
molecules might be intermixed with each other.

Next we will discuss an important extension to above definitions
to handle more realistic scenarios whereby sets of molecules
forming higher level organizational structures reproduce
collectively.

\section{Self Reproduction on Higher Organizational Levels}

\subsection{Entities on Higher Organizational Levels}

To achieve this aim, we will inductively define the hierarchical
sets as entities at different levels. Consider the \emph{level}
$0$ entities as all ``syntactically valid" molecules appearing at
any state of the chemistry during its dynamical progression
through time. $\Omega_0$ used above denotes the set of all such
\emph{level} $0$ entities.

Then \emph{level} $1$ entities are any finite subsets of
$\Omega_0$ of size $>1$. Let $\Omega_1$ be the set of all such
\emph{level} $1$ entities. Thus

\[\Omega_1 = \{ x \ | \ [x \subseteq \Omega_0]
             \  \wedge \ [x \cap \Omega_0 \neq \emptyset]\ \wedge \ [|x| > 1]
             \}\]

Note that we do not consider a singleton set consisting of only
one \emph{level} $0$ entity as an \emph{level} $1$ entity.
Similarly \emph{Level} $2$ entities consist of finite number of
\emph{level} $0$ and \emph{level} $1$ entities. That is, each
\emph{level} $2$ entity is finite subset of $\Omega_0 \cup
\Omega_1$ of size $> 1$. Let $\Omega_2$ be set of all such
\emph{level} $2$ entities. This way we can inductively define the
set of \emph{level} $n$ entities as
\[\Omega_n = \{ x \ | \ [x \subseteq \bigcup_{0 \leq i < n}\Omega_i]
                \  \wedge \ [x \cap \Omega_{n-1} \neq \emptyset]\ \wedge \ [|x| >
                1]\}\]

The above classification of higher level entities in the
chemistry, though captures syntactical essence of hierarchical
structures, does not specify their dynamical structure, which is
one of the important problems to be addressed in ALife theories. In
this paper we will focus our attention to only the
characterization of self replication for such higher level
structures and will not provide analysis on how these structures
emerge per se in the chemistry and maintain themselves.

\subsection{Defining Meta Reactions}
I will proceed by defining higher level ``meta" reactions which
form the counter part of higher level entities defined above.

Let us consider a \emph{level} $1$ entity $\zeta_{1} = \{e_1, e_2,
\ldots, e_r \}\in \Omega_1$, where $e_i \in \Omega_0$ and a
feasible reaction sequence $R$ = $<r_1, r_2, \ldots, r_k>$,
satisfying the following:

$$\forall 1 \leq i \leq k. \mathit{input_{r_i}} \cap \zeta_{1} \neq \emptyset$$

Then in that case we say that (\emph{level} $1$ entity)
$\zeta_{1}$ \emph{takes part} in \emph{level} $1$ (meta) reaction
$R$.

Also consider some other \emph{level} $1$ entity $\zeta_{2}$ such
that $$\zeta_{2} \subseteq (\mathit{Output}(R) -
\mathit{Input}(R))$$ Then in that case we say that $\zeta_{2}$ is
\emph{potentially causally related} to $\zeta_{1}$ and write it as
$\zeta_{1} \Rightarrow^1_R \zeta_{2}$.

For example, consider a sequence of reactions feasible in three
consecutive states of a chemistry as $$R = <r_1: a + a_1
\rightarrow 2c, r_2: a + c \rightarrow d, r_3: e + e_1 \rightarrow
d + f>$$ Now we can define $\zeta_1 = \{a, c, e_1\}$ which takes
part in $R$ because $\mathit{input_{r_1}} = \{a, a_1\}$ and $\{a,
a_1\} \cap \zeta_1 = \{a\} \neq \emptyset$, similarly
$\mathit{input_{r_2}} \cap \zeta_1 \neq \emptyset$ and
$\mathit{input_{r_3}} \cap \zeta_1 \neq \emptyset$. If we consider
$\zeta_2 = \{c, d, f\}$ then $\zeta_{2} \subseteq
(\mathit{Output}(R) - \mathit{Input}(R))$ Therefore we can also
infer that $\zeta_{2}$ is potentially causally related to
$\zeta_{1}$ through $R$, i.e., $\{a,b,c\} \Rightarrow^1_R
\{c,d,f\}$. Note that the given formulation also allows trivial
cases where certain collections of entities are inferred as
causally connected while in reality only the individual elements
appearing in those collections are independently causally
connected. For illustration let me consider another feasible
sequence $$R' = <r_1: a + a_1 \rightarrow 2a', r_2: b + b_1
\rightarrow b', r_3: e + e_1 \rightarrow e' + f>$$ Also define
$\zeta' = \{a, b, e_1\}$ which takes part in $R$. Next let us
select $\zeta'' = \{a', b', f\}$ then $\zeta'' \subseteq
(\mathit{Output}(R') - \mathit{Input}(R'))$. Therefore we can
infer that $\zeta' \Rightarrow^1_{R'} \zeta''$ even though this is
merely because of the fact that component elements in $\zeta'$ and
$\zeta''$ are independently causally connected, i.e., $a
\Rightarrow_c a'$ through $r_1$, $b \Rightarrow_c b'$ through
$r_2$, and $e \Rightarrow_c f$ through $r_3$. It is clear that, to
be meaningful, we need to exclude such trivial cases while
defining potential self replication for emerging higher level
entities.

\textbf{Constraint of non-triviality:} This is done by enforcing
another constraint to ensure that total number of potential causal
paths between $\zeta'$ and $\zeta''$ are strictly more than
$|\zeta''|$ - this is because - then in that case there will be at
least one component in $\zeta''$ which must be causally connected
to more than one element in $\zeta'$. An even more strict
constraint using the concept of reaction graphs can be formulated
where we can demand absence of cliques in the reaction graph
consisting of potential causal paths between elements of $\zeta'$
and $\zeta''$ to ensure non triviality of causality but we will
not pursue it here.

Also it should be pointed out that \emph{level} $1$ reactions have
to have time progression built into them, that is, should be
feasible reaction sequences. Thus not every subset of \emph{level}
$0$ reactions can be considered as a \emph{level} $1$ reaction.

Now we are in a position to define a potential causal path which
will be then used to define self replication of \emph{level} $1$
entities in terms of \emph{level} $1$ reactions.

Consider two such \emph{level} $1$ reactions $R$ = $<r_1, r_2,
\ldots, r_n>$ and $S$ = $<s_1, s_2, \ldots, s_m>$. We say $R$
\emph{temporally precedes} $S$ if and only if $r_1$ precedes $s_1$
and $r_n$ also precedes $s_m$ over some sequence of states
$<P_{i_i}, P_{i_2}, \ldots, P_{i_k}>$ in the run $\mathcal{P}$,
where $max(m,n) \leq k \leq m + n$. Then $<R, S>$ can be
considered as a \emph{level} $1$ feasible reaction sequence.

Let $\mathbf{R}$ = $<R_1, R_2, \ldots, R_n>$ and define for
$\zeta, \zeta' \in \Omega_1$, $ \zeta \Rightarrow^1_{\mathbf{R}}
\zeta'$ when $\exists$ $\zeta_0, \zeta_1, \ldots, \zeta_n \  \in \
 \Omega_1$ such that $\zeta_0 = \zeta$, $\zeta_n = \zeta'$ and $\zeta_i
\Rightarrow^1_{R_i} \zeta_{i+1}$ $\forall$ $0 \leq i < n$. Such
feasible reaction sequence $\mathbf{R}$ will be termed as the
\emph{level 1 potential causal path} between $\zeta$ and $\zeta'$.
There can be multiple such potential causal paths present between
$(\zeta, \zeta')$.

As discussed before, this definition permits trivial scenario of
\emph{level $1$} potential causal paths which are the result of
the presence of independent \emph{level $0$}  potential causal
paths between the elements of $\zeta = \{e_1, e_2, \ldots, e_l\}$
and $\zeta' = \{e_1', e_2', \ldots, e_m'\}$. In order to eliminate
this situation we need to enforce the constraint of non-triviality
: we say $\mathbf{R}$ is a \emph{non-trivial potential causal
path} between $\zeta$ and $\zeta'$ if and only if number of
potential causal paths between pairs of elements from $\zeta$ and
$\zeta'$ are more than $m$ indicating network dependence. This is
because in case of trivial potential causal path between $\zeta$
and $\zeta'$ there will in turn be exactly $m$ \emph{level $0$}
independent potential causal paths producing each of $e_i'$, $1
\leq i \leq m$.

%Definition#

\begin{df}[\textbf{Potentially Self Reproducing Sets of Entities}]
A \emph{level} $1$ entity $\zeta$ is defined as potentially self
reproducing in chemistry $\mathcal{A}$, if $\exists$ a run
$\mathcal{P}$ of $\mathcal{A}$ for which the following holds:

$\exists$  $\zeta' \in \Omega_1$  such that the following
conditions are satisfied:
\begin{description}
    \item[Observational Equivalence] $\zeta' \sim^1 \zeta$, where exact
definition of $\sim^1 \subseteq \Omega_1 \times \Omega_1$ is again
dependent on the underlying chemistry structure and its designer
or the observer. An observer might, for example, define $\zeta'
\sim^1 \zeta$ if both sets are equivalent under $\sim$, that is,
there exists an one to one equivalence between the elements of
$\zeta$ and $\zeta'$.
    \item[Reflexive Autocatalysis] $\exists$ non trivial
causal path $\mathbf{C_p}$ consisting of level $1$ reactions  so
that $\zeta \Rightarrow^1_{\mathbf{C_p}} \zeta'$.
    \item[Material Basis] For every such non-trivial potential causal path
$C_p$ between $(\zeta, \zeta')$, which is a feasible reaction
sequence in a subsequence of states $<P_{i_1}, P_{i_2} \ldots
P_{i_n}>$ of $\mathcal{P}$, there should be an resultant increase
in the size of population of $\zeta$ and corresponding decrease in
some other entity populations participating the the reaction
sequence ($C_p$) in state $P_{i_n}$ as compared to the sizes of
these populations in initial state $P_{i_1}$.
\end{description}
\end{df}

The above approach can be extended without much difficulty to
inductively define meta reactions on even higher levels  in the
chemistry.

Note that unlike other formalisms based upon
hyperstructures~\cite{bass92,ana011,mr98} we do not reply on
informally defined notion of observation dependent emergent
properties on higher level (hyper) structures but specifically
focus our attention to self reproduction as such property which
emerges owing to collective reaction semantics. The notion of
observational equivalence as discussed before should not be
confused with the notion of emergent properties in
hyperstructures.

Due to space limitations detailed case study illustrating the
formalism would be presented a forthcoming paper~\cite{forth01}.

%\section{Examples}
%
%Next I will consider examples from known studies to illustrate the
%expressive power of the inductive formulation defined above.

\section{Conclusion}
In this paper we presented a rigorous formalism to define higher
level organizational structures in terms of hierarchal sets and
corresponding non trivial meta reactions. The formalism can
adequately capture syntactical representations of important higher
order structures and meta reaction sequences these structures can
take part in. The constraint of non triviality allows us to
distinguish the genuine case of higher level organization with a
collection of reacting entities. The formalism allowed us to
define concretely the case of self reproduction even when we allow
mutations under observable limitations. The definition of self
reproduction is quite generic and captures the essence of self in
terms of observed equivalence.

\section{Further Work}

This is an ongoing work with the aim to capture the necessary and
sufficient conditions for evolution to occur in important ALife
studies. We need to introduce explicitly a notion of mutations,
heredity and most importantly selection by considering a
population of reproducing entities. We need to define certain
closure properties for such higher level entities which will
ensure that even under mutations which change the syntactical
structure of entities they can nonetheless semantically retain
their properties e.g. self reproduction. Detailed case studies
will be used to further refine the formalism. We also need to
extend the current formalism by considering the more generic
scenario involving probabilistic reactions or stochastic dynamics,
whereby we can address the questions involving how do
developmental pathways get selected and fixed over the course of
evolution.

\bibliography{alife}

\newcommand{\etalchar}[1]{$^{#1}$}
\begin{thebibliography}{RBM{\etalchar{+}}01b}

\bibitem[Baa92]{bass92}
N.~Baas.
\newblock Emergence, hierarchies and hyperstructures.
\newblock In {\em Artificial Life III}, pages 515--537. Cambridge, MA: MIT
  Press, 1992.

\bibitem[BH03]{AL03}
Adams B. and Lipson H.
\newblock A universal framework for self-replication.
\newblock In {\em European Conference on Artificial Life, ECAL'03, Lecture
  Notes in Computer Science Vol 2801}, pages 1--9, Dortmund, Germany, 2003.

\bibitem[BMP{\etalchar{+}}00]{ac:BMPRAGIKR00}
M.~A. Bedau, J.~S. McCaskill, N.~H. Packard, S.~Rasmussen, C.~Adami, D.~G.
  Green, T.~Ikegami, K.~Kaneko, and T.~S. Ray.
\newblock Open problems in artificial life.
\newblock {\em Artif. Life}, 6(4):363--376, 2000.

\bibitem[DZB01]{ac:Dittrich01}
P.~Dittrich, Jens Ziegler, and Wolfgang Banzhaf.
\newblock Artificial chemistries - a review.
\newblock {\em Artificial Life}, 7(3), 2001.

\bibitem[GM01]{ana013}
Dominique Gross and Barry McMullin.
\newblock Is it the right ansatz?
\newblock {\em Artificial Life}, 7(4):355 -- 365, 2001.

\bibitem[Mis]{forth01}
Janardan Mishra.
\newblock A multi-set theoretic framework for evolution.
\newblock {\em Forthcoming}.

\bibitem[MS98]{mr98}
B.~Mayer and Rasmussen S.
\newblock Self-reproduction of dynamical hierarchies in chemical systems.
\newblock In C.~Adami, R.~Belew, H.~Kitano, and C.~Taylor, editors, {\em
  Artificial life VI}, page 123–129. Cambridge, MA: MIT Press, 1998.

\bibitem[RBM{\etalchar{+}}01a]{ana011}
Steen Rasmussen, Nils~A. Baas, Bernd Mayer, Martin Nilsson, and Michael~W.
  Olesen.
\newblock Ansatz for dynamical hierarchies.
\newblock {\em Artificial Life}, 7(4):329 -- 353, 2001.

\bibitem[RBM{\etalchar{+}}01b]{ana012}
Steen Rasmussen, Nils~A. Baas, Bernd Mayer, Martin Nilsson, and Michael~W.
  Olesen.
\newblock A defense of the ansatz for dynamical hierarchies.
\newblock {\em Artificial Life}, 7(4):367--373, 2001.

\bibitem[Sip98]{Sipper98}
Moshe Sipper.
\newblock Fifty years of research on self-replication: An overview.
\newblock In {\em Artificial Life IV}, pages 237--257, 1998.

\bibitem[Zel81]{Zeleny81}
M.~Zeleny, editor.
\newblock {\em Autopoiesis: A Theory of Living Organization}.
\newblock North Holland, New York, 1981.

\end{thebibliography}
\bibliographystyle{alpha}

\end{document}